\newtheorem{proposition}{Proposition}
\def\int{\displaystyle\mathop {\mbox{\rm int}}}    
\newcommand*{\rom}[1]{\expandafter\@slowromancap\romannumeral #1@}
\def\thanks#1{\protected@xdef\@thanks{\@thanks
        \protect\footnotetext{#1}}}
\newcommand {\myvec}[1] {{\mbox{\boldmath $#1$}}}
\DeclareMathOperator{\erf}{erf}
\title{FineGates: LLMs Finetuning with Compression using Stochastic Gates}
\author{
    Jonathan Svirsky\thanks{J. Svirsky and O. Lindenbaum are with the Faculty of Engineering, Bar Ilan University, Ramat Gan, Israel. Email: \texttt{svirskj@biu.ac.il}, \texttt{ofir.lindenbaum@biu.ac.il}} 
    \and
    Yehonathan Refael\thanks{Y. Refael is with the Department of Electrical Engineering-Systems, Tel Aviv University, Tel Aviv 6997801, Israel. Email: \texttt{refaelkalim@mail.tau.ac.il}} 
    \and
    Ofir Lindenbaum
}
\begin{document}

\maketitle

\begin{abstract}
Large Language Models (LLMs), with billions of parameters, present significant challenges for full finetuning due to the high computational demands, memory requirements, and impracticality of many real-world applications. When faced with limited computational resources or small datasets, updating all model parameters can often result in overfitting. To address this, lightweight finetuning techniques have been proposed, like learning low-rank adapter layers. These methods aim to train only a few additional parameters combined with the base model, which remains frozen, reducing resource usage and mitigating overfitting risks. In this work, we propose an adaptor model based on stochastic gates that simultaneously sparsify the frozen base model with task-specific adaptation. Our method comes with a small number of trainable parameters and allows us to speed up the base model inference with competitive accuracy. We evaluate it in additional variants by equipping it with additional low-rank parameters and comparing it to several recent baselines. Our results show that the proposed method improves the finetuned model accuracy comparatively to the several baselines and allows the removal of up to 20-40\% without significant accuracy loss. 
\end{abstract}
\section{Introduction}
Large language models (LLMs) have revolutionized natural language processing by enabling powerful and versatile applications across various tasks. These models, pre-trained on vast amounts of text data, possess a deep understanding of language, making them valuable for text generation, translation, and sentiment analysis tasks. However, finetuning is often necessary to tailor these models to specific applications. Finetuning allows the model to adapt to the nuances of a particular task by updating its parameters based on a smaller, task-specific dataset. The challenge is that users typically have limited data for finetuning, which can constrain the model's ability to optimize for the desired task fully and may lead to overfitting or suboptimal performance. Despite these challenges, finetuning remains crucial in leveraging the full potential of large language models for specialized applications.

Recently, several innovative methods have been proposed to optimize the finetuning process of large language models, addressing the challenges associated with limited data and the computational cost of updating all model parameters. One such approach is LoRA (Low-Rank Adaptation) \citep{hu2021lora}, which introduces a more efficient way to finetune models by freezing the original parameters and injecting trainable, low-rank matrices into each layer of the model. This technique significantly reduces the number of parameters that need to be updated during finetuning, making the process both faster and less resource-intensive. Most recent efforts in optimizing finetuning \citep{zhang2023lora, chavan2023one, xu2023qa, li2022parameter, lin2024nora, balazy2024lora, hu2021lora}, including LoRA, focus on adding new parameters while keeping the base model frozen, ensuring that the original pre-trained knowledge is retained \citep{rozner2024knowledge}.

When training models using low-rank adapters \citep{kopiczko2023vera, zhang2023lora, lin2024nora, balazy2024lora, hu2021lora}, the number of updated parameters is reduced, resulting in faster convergence during finetuning. However, the inference runtime remains the same as it still depends on the base model size. To treat this issue several methods were proposed large language models (LLM) compression like like quantization and pruning. Quantization reduces the memory usage of language models by converting their parameters into lower-bit data types \citep{bondarenko2024low, lin2024awq}. Although quantization reduces the memory consumption of language models, its speedup advantages rely on specialized framework support, which limits its flexibility and adaptability. 

Pruning \citep{ma2023llm} aims to improve inference efficiency in language models by removing unimportant parameters. Structured pruning \citep{xia2022structured} removes consistent blocks of parameters, or model dimensions, achieving more general inference efficiency improvements. This method often involves knowledge distillation \citep{hinton2015distilling}, increasing training costs. 

In this work, we propose a simple, efficient, and effective adaptation method that adopts the base model for the target task by learning stochastic gates on the weights of the base model with optional low-rank updates.  Instead of only adding low-rank matrices for adaptation, we train gates to preserve only task-specific information within the base model itself. This approach allows us to maintain the integrity of the pre-trained model while incorporating the essential nuances of each specific task directly into its structure, leading to a more effective finetuning process. Our method not only preserves task-specific information within the base model but also allows for a significant reduction in the adopted base model's layer weights. By efficiently embedding the necessary task-specific details while reducing unnecessary parameters, our method enhances both the effective performance and efficiency of the finetuned model, making it better suited for real-world applications where speed and resource usage are critical. In addition, our model is optimized in an end-to-end fashion without requiring post-training pruning, incorporating optimization sub-stages during the training, or increasing the overall finetuning time. We evaluate our method on Transformer-base models and show that effective finetuning is achievable along with base model parameters compression, which could be reduced by up to $40\%$ without significant loss in accuracy. In the next sections, we present our method and the empirical evidence of its effectiveness. We also provide a convergence proof of our method.

\section{Related Work}

\subsection{Low-Rank Adaptation}

Low-rank adaptation aims to tune LLMs with limited resources by updating a small number of parameters
by tuning injected layer modules \citep{pfeiffer2020adapterfusion,houlsby2019parameter}, embeddings \citep{lester2021power, li2021prefix} or training with a low-rank structure of the gradients \citep{refael2024adarankgrad}. One widely used method, LoRA \citep{hu2021lora}, tunes low-rank decomposed layers to avoid training cost overhead. However, LoRA keeps
the tuning layer shapes in the base model static without dynamic adjustments. Another work by \citep{he2022sparseadapter} dynamically adjusts tuning parameters during training, and \citep{zhang2023adalora} gradually reduces tuning parameters, but neither of them benefits the inference efficiency of the finetuned model.

\subsection{Finetuning with Pruning}
There are two main types of model pruning during finetuning: structured \citep{xia2022structured, zhao2024apt} and unstructured \citep{sanh2020movement}. Unstructured pruning prunes the most unimportant parameters in the model without any order, while structured pruning prunes entire blocks, rows, or columns in the weight matrices. Moreover, a post-training pruning method, i.e., proposed by \citep{frantar2023sparsegpt}, aims to prune finetuned models with limited extra costs but requires initialization from fully finetuned models. Compared to these methods, our approach focuses on accurately adapting the model to the target task while pruning 20-30\% of the base model parameters, with minimal accuracy loss. Additionally, the pruning process is integrated with model adaptation, ensuring no extra training time is required.

\subsection{Finetuning with Adaptive Pruning}

A recent work that closely aligns with our goal was proposed by \citep{zhaoapt}, where the base model parameters are pruned during the adaptor training. However, this method requires approximately five times more training time than full model finetuning, and the compression is achieved through sorting and binary searching on the weight blocks. Additionally, due to the adaptive rank of the low-rank adaptor weights, the memory usage during optimization approaches about 70\% of what is required for full model finetuning. In contrast, our simple method necessitates significantly less optimization memory, comparable to the LoRA consumption with a fixed rank, and does not impose any substantial additional training time burden compared to the full finetuning approach.

\section{Problem Formulation}
Assume we are given a pre-trained large language model $P_{\myvec{\Theta}} (\myvec{y}|\myvec{x})$ parametrized by $\myvec{\Theta}$ based on the Transformer architecture \citep{vaswani2017attention}. Our goal is to adapt this pre-trained model to downstream natural language understanding tasks, such as question-answering or sentiment analysis. The downstream task is represented by a training dataset of context-target pairs: $\mathcal{Z} = \{(\myvec{x}_i, \myvec{y}_i)\}_{i\in[N]}$, where $\myvec{x}_i$ is a sequence and $\myvec{y}_i$ is a target label. For example, in the question-answering task (QQP) in the GLUE benchmark \citep{wang2019superglue}, $\myvec{x}_i$ is a question, and $\myvec{y}_i$ is its corresponding answer. 

To finetune the whole model parameters (full finetuning), the model is initialized to pre-trained weights $\myvec{\Theta}_0$ and updated to $\myvec{\Theta}_0 + \Delta \myvec{\Theta}$ by repeatedly following the gradient updates to maximize the conditional language modeling objective:
\begin{equation}
    \max_{\myvec{\Theta}} \sum_{(\myvec{x},\myvec{y}) \in \mathcal{Z}} \sum_{t=1}^{|\myvec{y}|} \log (P_{\myvec{\Theta}} (y_t|\myvec{x},\myvec{y}_{<t})).
\end{equation}

In the low-rank adaptation method the task-specific parameter increment $\Delta \myvec{\Theta} = \Delta \myvec{\Theta} (\myvec{\Gamma})$ is further encoded by a much smaller-sized set of parameters $\myvec{\Gamma}$ with $|\myvec{\Gamma}|  \ll |\Theta_0|$. The task of finding $\Delta \Theta$ thus becomes optimizing over $\myvec{\Gamma}$ and not $\myvec{\Theta}$,
\begin{equation}
    \max_{\myvec{\Gamma}} \sum_{(\myvec{x},\myvec{y}) \in \mathcal{Z}} \sum_{t=1}^{|\myvec{y}|} \log (P_{\myvec{\Theta}_0 + \Delta \myvec{\Theta}(\myvec{\Gamma})} (y_t|\myvec{x},\myvec{y}_{<t})).
\end{equation}
While being beneficial for preserving the same base model for different tasks, this approach preserves non-useful information in the base model and still requires a forward pass through the large number of parameters in the base model during the inference.

In this work, we propose to add a \textit{gates} vector 
$\myvec{\omega}\in \{0,1\}^{1 \times d}$, parametrized by $\myvec{\Omega}$,
for the base model parameters additionally to the learned $\Delta \myvec{\Theta}$. Our approach implies a \textit{structured sparsity} on the base model \citep{wen2016learning} since we aim to exclude the whole columns in the weight matrices. Hence, the objective of the finetuning task becomes:
\begin{equation}
    \begin{aligned}
    \min_{\myvec{\Omega}} \max_{\myvec{\Gamma}} & \bigg( \sum_{(\myvec{x},\myvec{y}) \in \mathcal{Z}} \sum_{t=1}^{|\myvec{y}|} \log \left(P_{\myvec{\omega}_r \cdot (\myvec{\Theta}_0  + \Delta \myvec{\Theta}(\myvec{\Gamma})) \cdot \myvec{\omega}_c} (y_t|\myvec{x},\myvec{y}_{<t})\right) + \\
 &+ \lambda_1 \cdot \max(||\myvec{\omega}_r||_0, s) + \lambda_2 \cdot \max(||\myvec{\omega}_c||_0,s) \bigg),
    \end{aligned}
\end{equation}
where $||\cdot||_0$ is a zero norm of the gate parameters $\myvec{\omega}_r,\myvec{\omega}_c$ which multiply the rows and columns of the base model parameters such that $\myvec{\omega}_r \cdot (\myvec{\Theta}_0  + \Delta \myvec{\Theta}(\myvec{\Gamma})) \cdot \myvec{\omega}_c$. 

The parameter $\lambda$ represents the structured sparsity regularization magnitude and $s$ is a target sparsity ratio defined by the number of zero gates divided by the total number of gates in a gating vector.
To clarify, the structured sparsity is obtained by training two vectors $\myvec{\omega}_l,\myvec{\omega}_r$ where each element multiplies the entire row/column in a given weight matrix. Presenting such a sparsification mechanism helps to reduce the memory and time complexity in attention layers. 

An intriguing research question to consider is whether the sparsification applied to the base model weights is sufficient for adapting the model to the target task without the need to learn and add additional parameters, denoted as $\Delta \myvec{\Theta}(\myvec{\Gamma})$. In this scenario, only the target task head is optimized, while the base model is compressed through gating mechanisms to better fit this task. In this case, the simplified objective becomes: 
\begin{equation}
     \min_{\myvec{\Omega}} \sum_{(\myvec{x},\myvec{y}) \in \mathcal{Z}} \sum_{t=1}^{|\myvec{y}|} \log \left(P_{\myvec{\omega}_r \cdot \myvec{\Theta}_0 \cdot \myvec{\omega}_c} (y_t|\myvec{x},\myvec{y}_{<t})\right)
    + \lambda_1 \cdot \max(||\myvec{\omega}_r||_0, s) + \lambda_2 \cdot\max(||\myvec{\omega}_c||_0,s).
\end{equation}

Our empirical results show that, in this setup, it is also possible to reduce the model size and increase its efficiency while providing accurate predictions.

\section{The Method}

Consider the Transformer architecture \citep{vaswani2017attention} composed of $L$ blocks and each block consists
of a multi-head self-attention (MHA) layer and a feed-forward (FFN) layer. A MHA layer with $N_h$ heads takes an input $X$ and outputs:
\begin{equation*}
\text{MHA}(\myvec{X}) = \sum_{i=1}^{N_h} \text{Att}(\myvec{W}_q^{(i)}, (\myvec{W}_k^{(i)},\myvec{W}_v^{(i)},\myvec{W}_o^{(i)}, \myvec{X}),
\end{equation*}
where $\myvec{W}_q,\myvec{W}_k,\myvec{W}_v$ and $\myvec{W}_o$ refer to the query/key/value/output projection matrices, and $\text{Att}(\cdot)$ is an attention function. After attention head, the outputs are passed through the feed-forward layer, which consists of intermediate and output-projection layers, parameterized by $\myvec{W}_{mlp}^i$ and $\myvec{W}_{mlp}^o$:
\begin{equation*}
\text{FFN}(X)=\text{gelu}(\myvec{X}\myvec{W}_{mlp}^{i}) \cdot \myvec{W}_{mlp}^o.
\end{equation*}

Denote by $\myvec{W}_0 \in \mathbb{R}^{k \times d}$ a pre-trained weight matrix out of $\{ \myvec{W}_q,\myvec{W}_k,\myvec{W}_v, \myvec{W}_o, \myvec{W}_{mlp}^i, \myvec{W}_{mlp}^o\}$. As proposed by \citep{hu2021lora}, its update is constrained by representing the latter with a low-rank decomposition $$\myvec{W}_0 + \Delta \myvec{W} = \myvec{W}_0 + \myvec{W}_B \myvec{W}_A,$$ where $\myvec{W}_B \in \mathbb{R}^{k\times r}$, $\myvec{W}_A\in \mathbb{R}^{r\times d}$ and rank $r \ll \min(d,k)$. 

To enforce structured sparsity of the matrix $\myvec{W}_0$, we propose to multiply it by the learnable \textit{stochastic gates} vector $\myvec{\omega} \in \{0,1\}^{1 \times d}$ which is trained to converge into the binary representation. To achieve that, we learn a representation $\myvec{\mu} \in [-1,1] ^ {1 \times d}$ which is then converted to the approximate Bernoulli variables $\myvec{\omega}$, by utilizing a Gaussian-based relaxation of Bernoulli variables \citep{yamada2020feature,jana2023support}. The relaxation relies on the reparameterization trick \citep{miller2017reducing, figurnov2018implicit} and was demonstrated effective in several applications \citep{svirsky2023interpretable,lindenbaum2021l0,yang2023multi,lindenbaumtransductive}, aims to reduce the gradient estimates' variance. During the training, the conversion is done by adding random noise vector $\myvec{\epsilon} \in \mathbb{R} ^{1 \times d}$ to the shifted by scalar $0.5$ vector  $\myvec{\mu}$ and clipping the values by range $[0,1]$, 
\begin{equation}
    \myvec{\omega}(\myvec{\mu}) = \max(0, \min(1, 0.5 + \myvec{\mu} + \myvec{\epsilon})), 
\end{equation}
where each value in the vector $\myvec{\epsilon}$ is drawn from $\mathcal{N}(0, \sigma^2)$ and $\sigma=0.5$ is fixed throughout training. To encourage the model to produce sparse $\myvec{\omega}$ vector, it is trained with the regularization loss term constrained by the given sparsity ratio $s$: 
\begin{equation}
    \mathcal{L}_{\text{sparse}}(\myvec{\omega})=\max (||\myvec{\omega}||_0, s).
\end{equation}

Assuming that $\myvec{\omega}$ is a Bernoulli variable, we calculate its expected $\ell_0$ norm as follows:
\begin{align*}
& \mathbb{E} ||\myvec{\omega}||_0 = \frac{1}{d}\sum_j  \mathbb{P}(\omega_j > 0) = \frac{1}{d}\sum_j    \mathbb{P}(\mu_j + 0.5 + \epsilon_j > 0)= \frac{1}{d}\sum_j    (1 - \mathbb{P}(\mu_j + 0.5 + \epsilon_j \leq 0)) = \\
& = \frac{1}{d}\sum_j  \left(1 - \Phi \left(\frac{-\mu_j - 0.5}{\sigma} \right) \right) =\frac{1}{d}\sum_j 1 - \frac{1}{2} \left(1 + \erf \left(-\frac{\mu_j + 0.5 }{\sqrt{2} \sigma}\right)\right) = \\
& = \frac{1}{d} \sum_j \left( \frac{1}{2} - \frac{1}{2} \erf \left(-\frac{\mu_j + 0.5 }{\sqrt{2} \sigma}\right)\right).
\end{align*}

When using the $\cal{L}_{\text{sparse}}$ term, the model tries to sparsify the matrix $\myvec{W}_0$ and preserve only the parameters that are essential for the task learning. Finally, assuming a  latent representation is obtained by $\myvec{h}= \myvec{W}_0\myvec{x}$, our method's forward pass yields:
\begin{equation}
    \myvec{h = \left[ \myvec{\omega}_r \cdot \myvec{W}_0\cdot \myvec{\omega}_c \right] \cdot \myvec{x}},
\end{equation}
and it could be extended by presenting LoRA-style learnable matrices $\myvec{W}_B, \myvec{W}_A$:
\begin{equation}
    \myvec{h = \left[ \myvec{\omega}_r \cdot [\myvec{W}_0 + \myvec{W}_B\myvec{W}_A]\cdot \myvec{\omega}_c \right] \cdot \myvec{x}},
\end{equation}
\begin{figure*}[t]
\centering
\includegraphics[width=.55\linewidth]{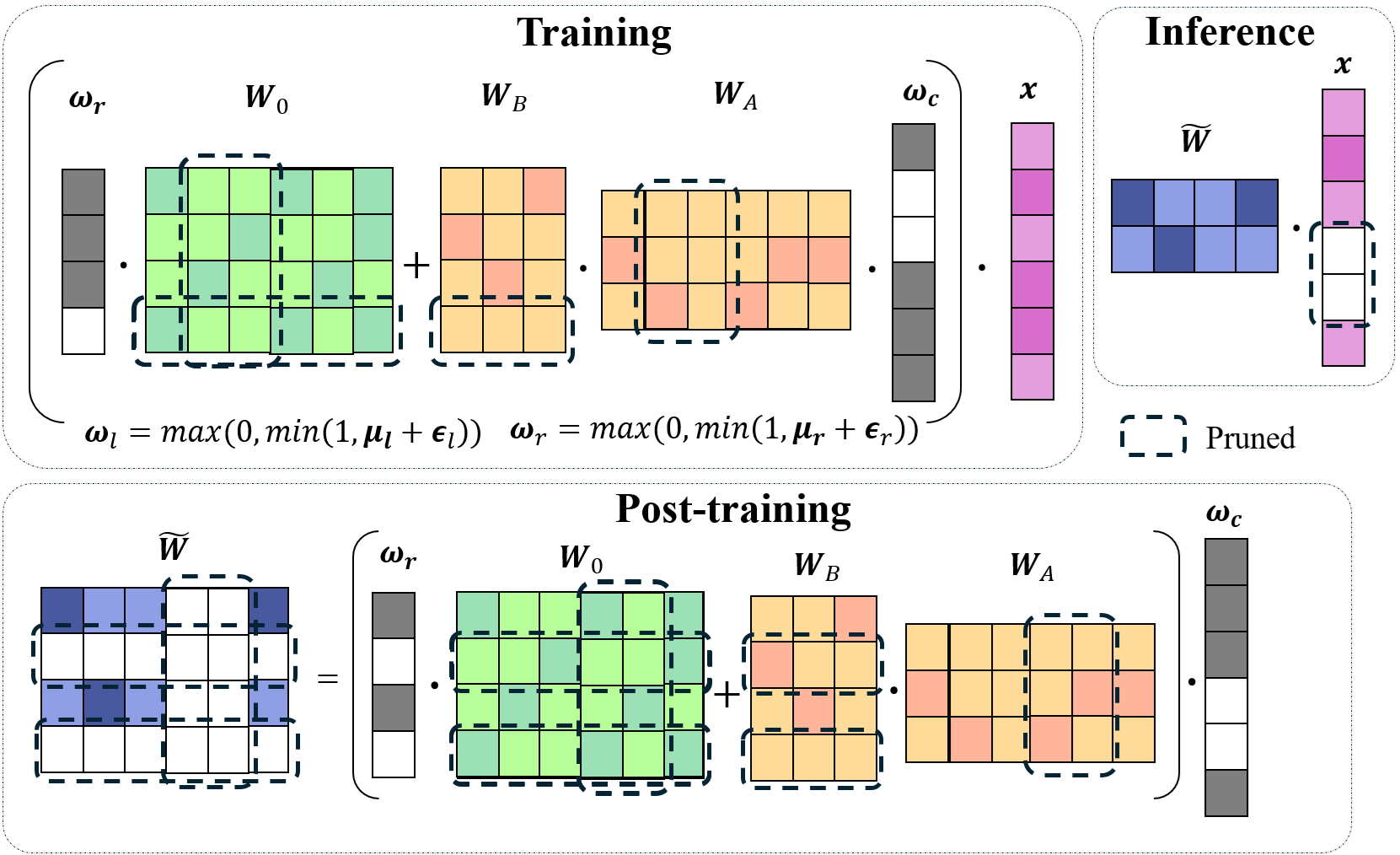}
\includegraphics[width=.4\linewidth]{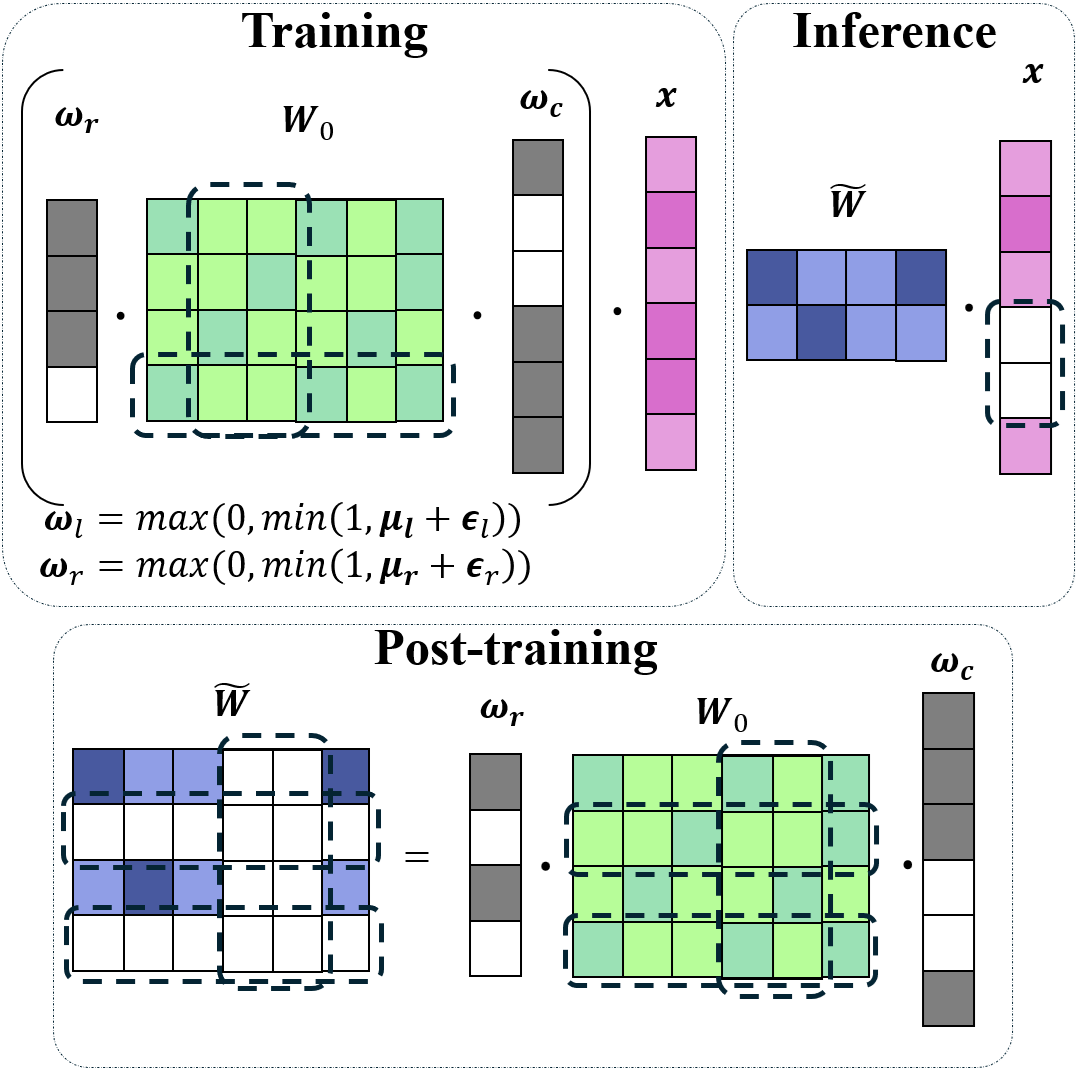} \\
\hspace{0.3in} \textbf{(a)} \hspace{3in} \textbf{(b)}
\caption{Two versions of our method: \textbf{(a)} In the first one, we train an adaptor with additional weights $\myvec{W}_A, \myvec{W}_B$. After training we compute the updated and pruned weight matrix $\Tilde{\myvec{W}} =  \myvec{\omega}_r \cdot (\myvec{W}_0 + \myvec{W}_B\myvec{W}_A)\cdot \myvec{\omega}_c$. \textbf{(b)} In the simplified version, the adaptor is based only on the trainable gates vectors $\myvec{\omega}_l, \myvec{\omega}_r$ that enforce structured sparsity.}

\label{fig:illusration}
\end{figure*}
where $\myvec{\omega}_r, \myvec{\omega}_c$ and $\myvec{W}_B, \myvec{W}_A$ are trainable parameters. Our method is depicted in Figure \ref{fig:illusration}. At the start of training, we initialize the vectors $\myvec{\omega}$  with all elements set to one.
Similarly to \citep{hu2021lora}, we use a random Gaussian initialization for $\myvec{W}_A$ and zero for $\myvec{W}_B$, so $\Delta \myvec{W} = \myvec{W}_B \myvec{W}_A$ is zero at the beginning of training. The total optimization objective with a hyperparameter $\lambda$ and task-specific loss term $\mathcal{L}_{\text{task}}$, i.e. cross-entropy, becomes: 
\begin{equation}
    \label{eq:loss} 
    \mathcal{L} = \mathcal{L}_{\text{task}} + \lambda \mathcal{L}_{\text{sparse}}.
\end{equation}
During training, we optimize the parameters $\myvec{\Omega}$ that multiply the matrices $\myvec{W}_q,\myvec{W}_k,\myvec{W}_v$, $\myvec{W}_o$ and $\myvec{W}_{mlp}$. In the extended version, we train also $\myvec{\Gamma}$ parameters that assemble the matrices $\myvec{W}_A, \myvec{W}_B$ for each adopted layer, as proposed by \citep{hu2021lora}.

Our method's simplicity allows us to train a small adaptor to the base model, achieving comparable or even improved accuracy. Additionally, we can significantly reduce the number of parameters in the base model with only a minor decrease in accuracy. Next, we present the empirical evaluation of our method. 

\section{Experiments}
\subsection{Experimental Setup and Datasets}

We assess the performance of our method on downstream tasks using the RoBERTa-base and RoBERTa-large models \citep{liu2019roberta}, and we utilize the widely recognized GLUE benchmark \citep{wang2019superglue}, as shown in Table \ref{tab:glue}. To simulate real-world conditions where data is typically scarce for finetuning tasks—due to the complex process of collecting ground truth labels—we limit the dataset to a maximum of 10,000 samples. Thus we use the small-scale datasets as-is (COLA, STSB, MRPC, RTE) and take the first 10K labeled samples from the large-scale datasets (MNLI, QQP, QNLI, SST2) in Table \ref{tab:glue_combined}. In addition, we conduct experiments on full SST2 and MNLI datasets (Table \ref{tab:sst2_mnli} and Figure \ref{fig:sparsity}). Each dataset has its own validation set on which all models are evaluated. We train all models by using Adam optimizer with decoupled weight decay regularization \citep{loshchilov2017decoupled} and optimize the $\myvec{\Omega}$ and $\myvec{\Gamma}$ separately with fixed learning rates for all tasks, $1e-3$ for $\myvec{\Omega}$ and $1e-4$ for $\myvec{\Gamma}$ parameters. We use NVIDIA A100 GPUs to train the models. We report the median accuracy value for each experiment over five random initialization seeds. The number of trainable parameters (TP) does not include the classifier head following the same setup as in previous works.

\begin{table}[t]
\caption{The GLUE benchmark datasets statistics.}
\label{tab:glue}
\begin{center}
\resizebox{.8\linewidth}{!}{
\begin{tabular}{cccccccccc} 
\hline
\textbf{Dataset} & MNLI & QQP & QNLI & SST2 & COLA & STSB & MRPC & RTE \\
\hline
\textbf{Samples} & 392,702 & 363,846 & 104,743 & 67,349 & 8,551 & 5,749 & 3,668 & 2,490 \\
\hline
\end{tabular}}
\end{center}
\end{table}

\begin{table*}[t]
\caption{Finetuning accuracy on GLUE benchmark datasets. We present the best accuracy results achieved by FineGates and the accuracy obtained with sparsity constraint $s \geq 10\%$, $s \geq 20\%$. The number of removed parameters is shown in {\color{olive}{olive}}, and the relative change in accuracy is depicted in {\color{gray}{gray}} compared to full fine-tuning. The results for some methods are missing because the corresponding baselines were not evaluated on these sub-sampled datasets (SST2, MNLI, QNLI and QQP). }
\label{tab:glue_combined}
\begin{center}
\resizebox{.99\linewidth}{!}{
\begin{tabular}{ll|llll|llll} 
\hline
\multicolumn{2}{c}{} & \multicolumn{4}{c}{Small} & \multicolumn{4}{c}{Sub-sampled Large}\\
Method & TP $\downarrow$ & CoLA & STS-B & MRPC & RTE & SST2 & MNLI & QNLI & QQP \\ 
\hline 
\multicolumn{10}{c}{\textbf{Roberta-Base}} \\
\hline
Full Finetune & 125M & 63.6 & 90.9 & 90.2 & 80.5 & 92.8 & 81.4 & 87.7 & 85.2  \\
\hline
LoRA($r=8$) \citep{hu2021lora} & 0.3M & 63.4 & \textbf{91.5} & 89.7 & \textbf{86.6} & \textbf{94.3} & \textbf{81.3} & \textbf{89.8} & \textbf{85.0} \\ 
LoRA-FA \citep{zhang2023lora} & 1.8M & 63.6 & 89.6 & \textbf{90.0}  & 67.9 & - & - & - & - \\
VeRA \citep{kopiczko2023vera} & 0.04M & 65.6 & 90.7 & 89.5 & 78.7 & - & - & - & - \\
FineGates & 0.17M & \textbf{65.7} & 90.8 &\textbf{90.0} & 83.4 & 94.0 & \textbf{81.3} & 89.1 & 84.9 \\
FineGates, $s \geq 10\%$ \color{olive}{(-12M)} & 0.17M & 65.2 ({\color{gray} +1.6\%}) & 90.7({\color{gray} -0.2\%}) & 89.2 ({\color{gray} -1\%}) & 81.3 ({\color{gray} +0.8\%})& 94.0 ({\color{gray} +1.2\%})& 80.6 ({\color{gray} -0.8\%}) & 88.8 ({\color{gray} +1.1\%}) & 84.3 ({\color{gray} -0.9\%})\\ 
FineGates, $s \geq20\%$ \color{olive}{(-25M)} & 0.17M & 61.4 ({\color{gray} -2.2\%}) & 90.5 ({\color{gray} -0.4\%})& 87.7 ({\color{gray} -2.5\%})& 78.0 ({\color{gray} -2.5\%})& 93.8 ({\color{gray} +1\%})& 79.4 ({\color{gray} -2\%})& 87.6 ({\color{gray} -0.1\%})& 83.7 ({\color{gray} -1.5\%})\\
\hline
\multicolumn{10}{c}{\textbf{Roberta-Large}} \\
\hline
Full Finetune & 355M & 68.0 & 92.3 & 90.9 & 86.6 & 93.5 & 85.9 & 92.4 & 85.8 \\
\hline
LoRA($r=8$) \citep{hu2021lora} & 0.8M & 68.2 & \textbf{92.6} & 90.9 & 87.4 & \textbf{96.2} & \textbf{87.2} & \textbf{93.2} & \textbf{86.5} \\
LoRA-FA \citep{zhang2023lora} & 3.7M & 68 & 92 & 90 & 86.1 & - & - & - & - \\
LoRA-XS \citep{balazy2024lora} & 0.06K &  68.5 &  92.2 & \textbf{91.2} & 89.5 & - & - & -& -\\
VeRA \citep{kopiczko2023vera} & 0.06M & 68.0 & 91.7 &  90.9 & 85.9 & - & - & - & - \\
FineGates & 0.4M & \textbf{71.4} & 92.1  & 90.9 & \textbf{90.2} & \textbf{96.2} & 86.2 & 92.4 & 86.1 \\
FineGates, $s \geq10\%$ \color{olive}{(-35M)} & 0.4M & 69.5 ({\color{gray} +1.5\%})& 91.8 ({\color{gray} -0.5\%})& 90.4 ({\color{gray} -0.5\%})& 89.2 ({\color{gray} +2.6\%})& 96.1 ({\color{gray}  +2.6\%})& 85.0 ({\color{gray} -0.9\%})& 92.1 ({\color{gray} -0.3\%})& 86.1 ({\color{gray} +0.3\%})\\ 
FineGates, $s \geq20\%$ \color{olive}{(-70M)} & 0.4M & 68.1 ({\color{gray} +0.1\%})& 91.3 ({\color{gray} -1\%})& 90.2 ({\color{gray} -0.7\%})& 87.0 ({\color{gray} +0.4\%})& 95.3 ({\color{gray} +1.8\%})& 85.0 ({\color{gray} -0.9\%})& 92.0 ({\color{gray} -0.4\%})& 85.2 ({\color{gray} -0.6\%})\\ 
\hline
\end{tabular}}
\end{center}
\end{table*} 
\begin{table}[h!]
\begin{center}
    \caption{Finetuning accuracy on MNLI, SST2 full datasets. The results for VeRA and LoRA-XS methods are missing because they were not evaluated on MNLI dataset.}
    \label{tab:sst2_mnli}
    \resizebox{.35\linewidth}{!}{
    \begin{tabular}{ll|cc} 
    \hline
    Method & TP $\downarrow$ & SST2 & MNLI \\ 
    \hline 
    \multicolumn{4}{c}{\textbf{Roberta-Base}} \\
    \hline
    Full Finetune & 125M & 94.8 & 87.6  \\
    \hline
    LoRA($r=8$) & 0.3M &  \textbf{95.1} & \textbf{87.5} \\ 
    LoRA-FA \citep{zhang2023lora} & 1.8M & 94.8 & 86.8 \\
    VeRA \citep{kopiczko2023vera} & 0.04M & 94.6 &- \\
    FineGates & 0.17M &  95.0 & 85.1 \\
    \hline
    \multicolumn{4}{c}{\textbf{Roberta-Large}} \\
    \hline
    Full Finetune & 355M & 96.4 & 90.2 \\
    \hline
    LoRA($r=8$) & 0.8M & 96.2 & \textbf{90.6} \\ 
    LoRA-FA \citep{zhang2023lora} & 3.7M &  96.0 & 90.1 \\
    LoRA-XS \citep{balazy2024lora} &  0.06M & \textbf{96.3} & - \\
    VeRA \citep{kopiczko2023vera} & 0.06M & 96.1 & - \\
    FineGates & 0.4M & \textbf{96.3} & 88.5\\
    \hline
    \end{tabular}
}
\end{center}
\end{table}

\subsection{Baselines}
We compare our method to full finetuning, LoRA \citep{hu2021lora}, LoRA-FA \citep{zhang2023lora}, VeRA \citep{kopiczko2023vera}, LoRA-XS \citep{balazy2024lora} methods. During the full finetuning, the model is initialized to the pre-trained weights and biases, and all model parameters undergo gradient updates. In the LoRA baseline, only the $\myvec{W}_B, \myvec{W}_A$ are learned while all base model layers are frozen. We train FineGates+LoRA and LoRA methods with the same $r=8$ on the subsampled datasets: SST2, MNLI, QNLI, and QQP.

Moreover, in Section \ref{sec:apt}, we compare our method and recently proposed APT \citep{zhaoapt}. While APT provided promising results by adaptively pruning the base model, our method is beneficial in two key components: (1) we learn the gates for the base model weights jointly with the finetuning task objective, (2) our model obtains comparable results without pruning attention heads and without additional distillation loss. However, encouraged by the APT \citep{zhaoapt} model and its predecessor CoFi \citep{xia2022structured}, we plan to extend our framework to be able to prune attention heads to increase the speedup and overall model sparsity.

\subsection{Accuracy Results}

We present the accuracy results in Table \ref{tab:glue_combined}. We report the overall (matched and mismatched) accuracy for MNLI, Matthew’s correlation for CoLA, Pearson correlation for STS-B, and accuracy for other tasks. 
From Table \ref{tab:glue_combined}, it could be seen that our model is comparable with LoRA and outperforms the full finetuning on average while being applied to the Roberta-Base base model for datasets CoLA, STS-B, MRPC, and RTE. 

Additionally, as shown in the TP column, our method achieves a significant reduction in the number of trainable parameters compared to LoRA for both the Roberta-Base and Roberta-Large models which is only $\sim 0.14\%$ of total parameters in the base model. Furthermore, our approach not only reduces the trainable parameter count but also compresses the base model itself, resulting in a parameter reduction of $10-20\%$ of parameters in the base model, all while maintaining an insignificant loss in accuracy (two last rows in the table). This highlights the efficiency and effectiveness of our method in balancing compression and performance. Our model outperforms the full finetune baseline and performs comparatively as well as the LoRA baseline for SST2, MNLI, QNLI, and QQP datasets. In addition, our model provides significant parameter reduction with accuracy close to LoRA and better than full finetune for these datasets.

We conduct also an evaluation of our method on full SST2 and MNLI datasets. The results are presented in Table \ref{tab:sst2_mnli}, where it can be seen that our method is comparable to other methods when applied in the Roberta-Base model and provides the best results for the SST2 dataset when applied to the Roberta-Large base model.

\subsection{Sparsification Results}
\begin{figure*}[h!]
\centering
\includegraphics[width=.32\linewidth]{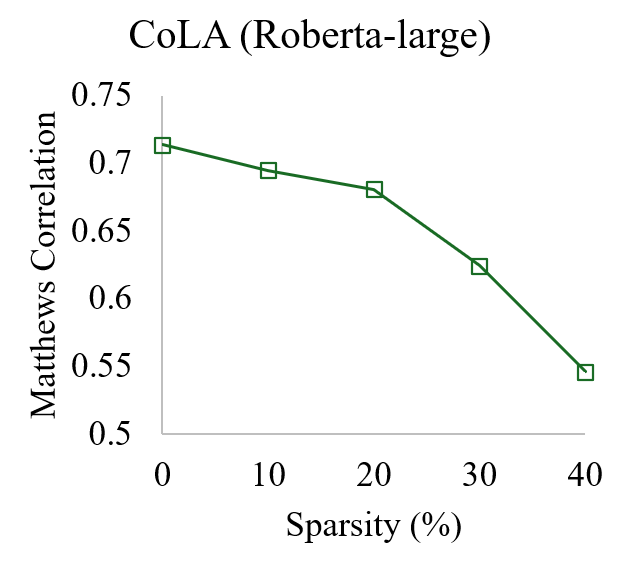}
\includegraphics[width=.32\linewidth]{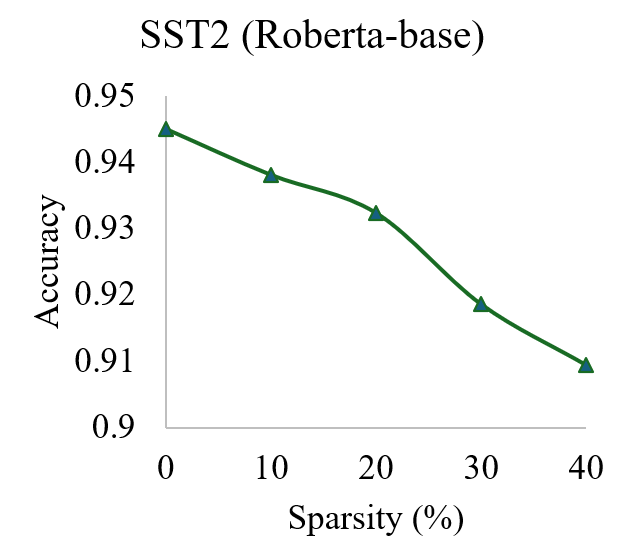}
\includegraphics[width=.32\linewidth]{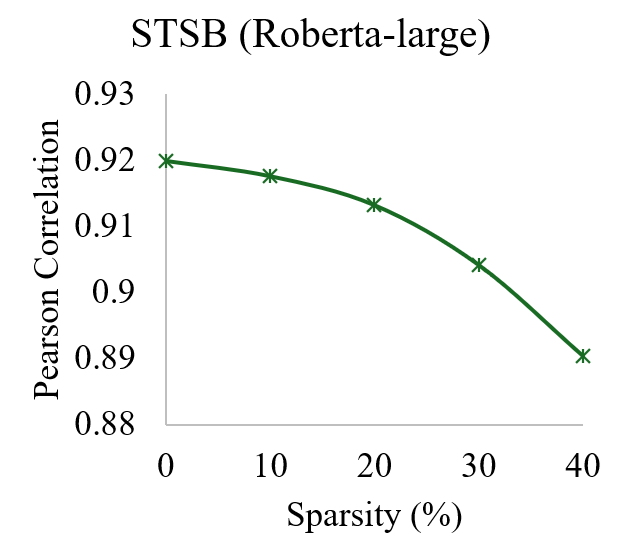}

\caption{Sparsification-Accuracy trade-off measured on CoLA, SST2, and STSB datasets. Our model provides $>\mathbf{40\%}$ of structured sparsity while sacrificing only $\mathbf{4\%}$ of accuracy compared to the model without sparsification on the SST2 dataset where we train $\myvec{\omega}_r, \myvec{\omega}_c$ with total $166K$ parameters. On CoLA the method reduces up to $20\%$ of parameters without significant loss in accuracy, and $40\%$ on the STSB dataset with only $3\%$ drop in accuracy.
} 
\label{fig:sparsity}

\end{figure*}

We present the sparsification results of FineGates in Figure \ref{fig:sparsity}. We report accuracy measurements for each sparsity level. It could be seen that our method can remove up to $20\%$ of parameters without significant loss in Matthews correlation for CoLA dataset, up to $40\%$ of parameters trained on the SST2 dataset with only a loss of $4\%$ in accuracy, and up to $40\%$ for STSB dataset with a loss of only $~3\%$ in Pearson Correlation metric.

\subsection{FineGates Modifications}
In Table \ref{tab:glue_combined_ablation}, we present three modifications of our mode: FineGates w/o $\myvec{W}_{mlp}$ that does not adapt intermediate and output projections for RoBERTa layers,  $\text{FineGates}+$LoRA adds low-rank matrices $\myvec{W}_{B},\myvec{W}_{A}$ with $r=8$ and $\text{FineGates}+$LoRA w/o $\myvec{W}_{mlp}$ is similar but without intermediate and output projections.
\begin{table*}[ht]
\caption{Ablation study of FineGates. The evaluation is done for the next versions of FineGates: (1) training without gates on $\myvec{W}_{mlp}$ matrices, (2) training with additional low-rank parameters $\myvec{W}_B \myvec{W}_A$, (3) training with low-rank parameters but without $\myvec{\omega}, \myvec{W}_B\myvec{W}_A$ parameters for $\myvec{W}_{mlp}$ matrices.}
\label{tab:glue_combined_ablation}
\begin{center}
\resizebox{.8\linewidth}{!}{
\begin{tabular}{ll|llll|llll} 
\hline
\multicolumn{2}{c}{} & \multicolumn{4}{c}{Small} & \multicolumn{4}{c}{Sub-sampled Large}\\
Method & TP $\downarrow$ & CoLA & STS-B & MRPC & RTE & SST2 & MNLI & QNLI & QQP \\ 
\hline 
\multicolumn{10}{c}{\textbf{Roberta-Base}} \\
\hline
FineGates & 0.17M & 65.7 & 90.8  & 90.0 & 83.4 & 94.0 & 81.3 & 89.1 & 84.9 \\
FineGates w/o $\myvec{W}_{mlp}$ & 0.04M & 65.1 & 90.9  & \textbf{90.4} & 80.5 & 94.2 & 81.0 & 89.2 & 84.7 \\
FineGates$+$LoRA & 1.4M & \textbf{66.5} &\textbf{91.2} & 90.2 & \textbf{83.8} & \textbf{94.3} & 81.4 & 89.4 & 84.8 \\
FineGates$+$LoRA w/o $\myvec{W}_{mlp}$ & 0.6M & 65.8 & 90.7 & 89.7 & 82.3 & \textbf{94.3} & \textbf{81.7} & 89.7 & \textbf{85.2} \\
\hline
\hline
\multicolumn{10}{c}{\textbf{Roberta-Large}} \\
\hline
FineGates & 0.4M & \textbf{71.4} & 92.1  & 90.9 & 90.2 & \textbf{96.2} & 86.2 & 92.4 & 86.1 \\
FineGates w/o $\myvec{W}_{mlp}$ & 0.1M & 70.5 & 92.0  & 91.4 &\textbf{90.6} & 96.1 & 86.6 & 92.3 & 86.4 \\
FineGates$+$LoRA & 3.8M & 70.1 & 92.2 & 90.6 & 88.1 & 95.4 & 86.1 & 92.1 & 85.5 \\
FineGates$+$LoRA w/o $\myvec{W}_{mlp}$ & 1.7M & 69.9 & \textbf{92.6} & \textbf{91.9} & 88.5 & \textbf{96.2} & \textbf{87.2} & \textbf{92.7} & \textbf{86.7} \\
\hline
\hline
\end{tabular}}
\end{center}
\end{table*}

\subsection{Comparison to the APT method}
\label{sec:apt}
To compare our method against the recently proposed APT model, we conduct experiments on MRPC dataset. First, we obtain the results for APT dataset with default parameters provided by the authors in the code. We observe, that the rank varies between values 8 and 64, hence, we conduct an experiment, where FineGates is trained with additional LoRA matrices with different ranks: $\{16, 32, 64 \}$ on the MRPC dataset with target sparsity fixed at 40\%. The results are presented in Table \ref{tab:mrpc_apt}. FineGates achieves performance comparable to the APT model but without the extensive pruning applied by APT. In our method, the sparsity rate is determined by reducing the number of attention weight dimensions after multiplication with the learned gates. However, when calculating the sparsity rate, we do not include parameter reductions resulting from pruning attention heads or reducing embedding dimensions. Hence, to obtain the same sparsity level as the APT method, our gates are required to be more sparse for matrices $W_q, W_k, W_v, W_o$ and $W_{mlp}$ than in the APT model.

\begin{table}
\begin{center}
    \caption{Comparison of FineGates+LoRA against APT method on MRPC dataset with Roberta-Base backbone.}
    \label{tab:mrpc_apt}
    \resizebox{.45\linewidth}{!}{
    \begin{tabular}{ll|cc} 
    \hline
    Method & Max $r$  & Sparsity & Accuracy \\ 
    \hline
    APT & 64 & $40\%$ & 87.9 \\
    FineGates+LoRA & 16 & $40\%$ & 84.6 \\
    FineGates+LoRA & 32 & $40\%$ & 84.8 \\
    FineGates+LoRA & 64 & $40\%$ & 86.5 \\
    \hline
    \end{tabular}
}
\end{center}
\end{table}

\subsection{Inference Speedup}

\textbf{Matrix multiplication speedup} \hspace{0.1in} 
We now assess the potential for speed improvements in latency offered by our model. Generally, to avoid the dependence of model speedup on device and software specifications, it is preferable to present the number multiply-accumulate (MAC) operations. However, the reduction in MACs is directly proportional to the column reduction. Instead, we present the real clock-time improvement measured by a wall clock on a specific device. To achieve that, we measure the multiplication time of $\myvec{W}^T \cdot \myvec{X}$ and compare it to the time of $(\myvec{W}^T \cdot \myvec{\omega}) (\myvec{X}  \cdot \myvec{\omega})$. We use a single weight $\myvec{W} \in \mathbb{R}^{1024 \times 1024}$ with an input tensor $\myvec{X} \in \mathbb{R}^{16 \times 1024}$ without bias, and repeat this operation 100K times on a CPU device (Intel(R) Core(TM) i9-12900H).
In Figure \ref{fig:times}(a), we present the measured time in milliseconds and add the relative time reduction in percentages as labels for each point. We note that the indexing operation adds a small computation overhead presented at the most left point at the zero sparsity level.

\textbf{Overall model speedup} \hspace{0.1in} 
Next, we measure the times for a single inference epoch on the CoLA validation set. We train FineGates with sparsity levels up to $40\%$  and measure the total inference time on a single NVIDIA GeForce RTX 3080 GPU. We repeat this experiment 10 times for each sparsity level and report the measured time in Figure \ref{fig:times}(b) with relative time factor (RTF) as labels for each point, which is computed relatively to the zero sparsity level as inference time without sparsity divided by the processing time at the given sparsity level. Combining the fact that our model achieves up to $20-30\%$ structured sparsity (Table \ref{tab:glue_combined} with the time reduction presented in Figure \ref{fig:times}, we conclude that our method leads to reduced training and inference time while maintaining high performance.

\begin{figure}[t]
\centering
\includegraphics[width=.4\linewidth]{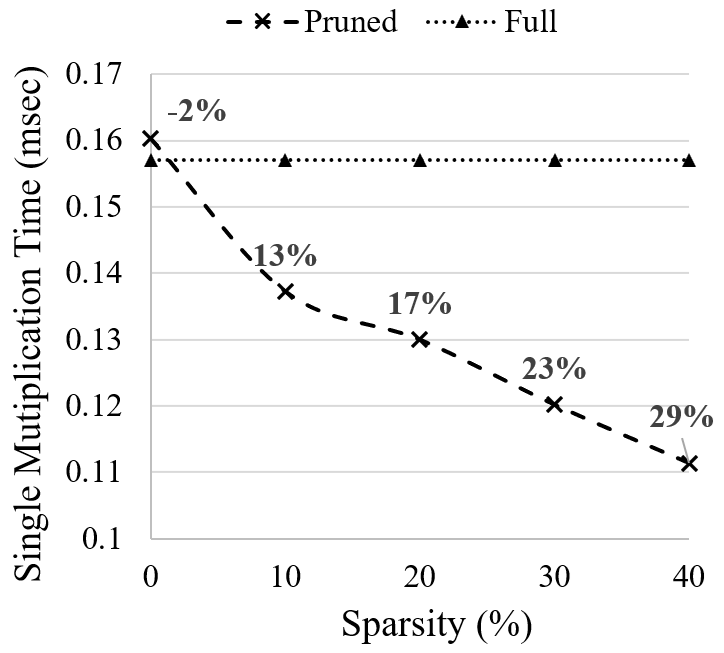}
\includegraphics[width=.45\linewidth]{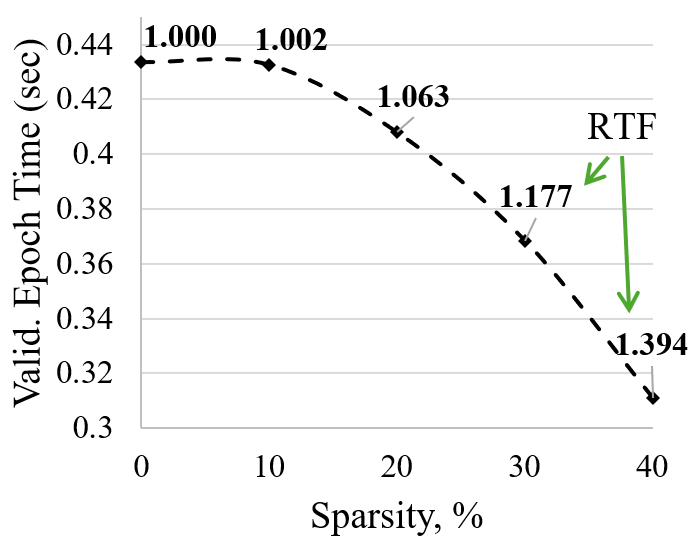}\\
(a) \hspace{2.5in} (b)
\caption{(a) Measuring relative time reduction in multiplication $(\myvec{W}^T \cdot \myvec{\omega}) (\myvec{X}  \cdot \myvec{\omega})$ compared to full matrices multiplication $\myvec{W}^T\myvec{X}$. We measure CPU time by repeating the operation 100K times and reporting the average time (vertical line) for each sparsity level (horizontal line). (b) Measuring inference time for a single validation epoch with varying sparsity levels.
} 
\label{fig:times}
\end{figure}

\section{Convergence}
In this section, we present a convergence proof for our method. This theoretical justification is necessary because including random noise in the gating mechanism could introduce challenges to the training process and affect convergence.

\begin{proposition}[Convergence of FineGates]
Suppose, $f\equiv\mathcal{L}_{\text{task}},$ is an $L$-smooth non-convex function that is bounded by $M$, then minimizing\footnote{By using the vanilla SGD} the whole objective $\mathcal{L}$ (FineGates) is guaranteed to converge to a stationary point.
\end{proposition}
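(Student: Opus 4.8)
The plan is to establish convergence by treating the objective $\mathcal{L} = \mathcal{L}_{\text{task}} + \lambda \mathcal{L}_{\text{sparse}}$ as a stochastic optimization problem and invoking the standard convergence guarantee for SGD on smooth nonconvex objectives. First I would argue that the \emph{full} objective $\mathcal{L}$ inherits $L$-smoothness from its two components. The task loss $f \equiv \mathcal{L}_{\text{task}}$ is assumed $L$-smooth and bounded by $M$ by hypothesis. The regularizer $\mathcal{L}_{\text{sparse}}$, after the expectation is taken, equals the smooth closed-form expression $\frac{1}{d}\sum_j \left(\frac{1}{2} - \frac{1}{2}\erf\!\left(-\frac{\mu_j + 0.5}{\sqrt{2}\sigma}\right)\right)$ derived earlier in the excerpt; since $\erf$ is infinitely differentiable with bounded derivatives, this is Lipschitz-smooth in the gate parameters $\myvec{\mu}$ with some constant $L'$. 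Hence $\mathcal{L}$ is $(L+\lambda L')$-smooth in the joint parameter vector $(\myvec{\Omega}, \myvec{\Gamma})$, and it is bounded below because $f$ is bounded by $M$ and the smoothed regularizer is bounded in $[0,1]$.

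Next I would address the stochasticity introduced by the injected Gaussian noise $\myvec{\epsilon} \sim \mathcal{N}(0, \sigma^2)$ in the gate relaxation $\myvec{\omega}(\myvec{\mu}) = \max(0, \min(1, 0.5 + \myvec{\mu} + \myvec{\epsilon}))$. The key observation is that the reparameterization trick makes $\myvec{\omega}$ a deterministic, almost-everywhere-differentiable function of $\myvec{\mu}$ and the independent noise $\myvec{\epsilon}$, so the gradient of $\mathcal{L}$ with respect to the parameters, computed at a fixed noise sample (together with the usual minibatch sampling from $\mathcal{Z}$), is an unbiased estimator of the gradient of the expected objective. I would therefore define the true objective as the expectation over both the data minibatch and the gate noise, and identify the per-step SGD direction as a stochastic gradient of this expectation. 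The hard-clipping $\max(0,\min(1,\cdot))$ is non-differentiable on a measure-zero set, so I would note that the straight-through or subgradient convention used in practice does not affect the expectation, and that the clipped variable has bounded range, which is what keeps the gradient variance finite.

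With smoothness, lower-boundedness, and unbiased bounded-variance gradient estimates in hand, the final step is to invoke the canonical SGD result: for an $L$-smooth function bounded below, vanilla SGD with an appropriately chosen (e.g. diminishing or sufficiently small constant) step size satisfies $\min_{t \le T} \mathbb{E}\|\nabla \mathcal{L}(\theta_t)\|^2 \to 0$ as $T \to \infty$, so the iterates converge to a stationary point. I would present the standard descent-lemma telescoping argument briefly: summing the one-step expected decrease $\mathbb{E}[\mathcal{L}(\theta_{t+1})] \le \mathbb{E}[\mathcal{L}(\theta_t)] - c_t \mathbb{E}\|\nabla \mathcal{L}(\theta_t)\|^2 + O(\eta_t^2)$ over $t$ and using boundedness yields the convergence of the minimal gradient norm.

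The main obstacle I anticipate is rigorously justifying the smoothness and bounded-variance claims \emph{through} the clipping nonlinearity and the min-max structure of the original formulation. The objective in the problem statement is a $\min_{\myvec{\Omega}}\max_{\myvec{\Gamma}}$ problem, whereas the stated proposition refers to minimizing ``the whole objective'' with vanilla SGD; I would need to clarify that the convergence claim is for the joint (single-level) training objective of Eq.~\eqref{eq:loss} actually optimized in the experiments, not a genuine saddle-point problem, since SGD convergence to a stationary point is a minimization statement and does not by itself certify a min-max equilibrium. Establishing that the clipped reparameterized gradient has uniformly bounded second moment — so that the variance term entering the descent lemma is controlled — is the technical crux on which the whole argument rests.
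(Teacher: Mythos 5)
Your proposal is correct and takes essentially the same route as the paper: both arguments reduce to showing that $\mathcal{L} = \mathcal{L}_{\text{task}} + \lambda\,\mathcal{L}_{\text{sparse}}$ is a smooth, bounded function of the trainable parameters (with the regularizer replaced by its closed-form $\erf$ relaxation) and then invoking the standard SGD convergence guarantee for smooth nonconvex objectives. The one step the paper carries out explicitly that you pass over is the transfer of $L$-smoothness of $f$ in its argument $\mathbf{W}\cdot\myvec{\omega}$ to joint Lipschitz-continuity of the gradient in the pair $(\mathbf{W},\myvec{\omega})$, which the paper obtains by differentiating the bilinear composition and using the boundedness of the gates ($0 \le \omega_i \le 1$) and of the weights; your assertion that $\mathcal{L}$ is $(L+\lambda L')$-smooth ``in the joint parameter vector'' silently assumes this composition step. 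Conversely, the obstacles you flag --- unbiasedness and bounded variance of the reparameterized gradient through the clipping nonlinearity, and the mismatch between the $\min$--$\max$ formulation and the single-level objective actually minimized --- are not addressed in the paper's proof at all, so your account is, if anything, more explicit about where the remaining technical work lies.
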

\begin{proof}
The relaxed objective function (\ref{eq:loss}) can be generally rewritten as,
$\mathcal{L}(\mathbf{W},\myvec{\omega})\equiv f\left(\mathbf{W}\cdot\myvec{\omega}\right) +\lambda h(\myvec{\omega}),$ where $\mathcal{L}_{\text{task}}\left(\mathbf{W},\myvec{\omega}\right)\equiv f\left(\mathbf{W}\cdot\myvec{\omega}\right),$ and $\mathcal{L}_{\text{sparse}}\left(\myvec{\omega}\right)\equiv h(\myvec{\omega})
$ is the regularization term.

Consider two points $ (\mathbf{W}_1, \myvec{\omega}_1)  \neq  (\mathbf{W}_2, \myvec{\omega}_2)$ and define $f(\mathbf{W}\cdot\myvec{\omega})\equiv F(\mathbf{W},\myvec{\omega})$. We aim to show that,
$$
\|\nabla F(\mathbf{W}_1, \myvec{\omega}_1) - \nabla F(\mathbf{W}_2, \myvec{\omega}_2)\| \leq L^\prime \|(\mathbf{W}_1, \myvec{\omega}_1) - (\mathbf{W}_2, \myvec{\omega}_2)\|
$$
for some constant \( L^\prime \).
 To that end, we first calculate  
 the gradient of \( F(\mathbf{W}, \myvec{\omega}) \) with respect to \( \mathbf{W} \),
$
\nabla_{\mathbf{W}} F(\mathbf{W}, \myvec{\omega}) = \nabla f(\mathbf{W} \cdot \myvec{\omega}) \cdot \myvec{\omega}^\top,
$
and with respect to \( \myvec{\omega} \) is,
$
\nabla_{\myvec{\omega}} F(\mathbf{W}, \myvec{\omega}) = \mathbf{W}^\top \nabla f(\mathbf{W} \cdot \myvec{\omega}).
$

First, for the gradient with respect to \( \mathbf{W} \), we have,
\begin{align*}
    \|&\nabla_{\mathbf{W}} F(\mathbf{W}_1, \myvec{\omega}_1) - \nabla_{\mathbf{W}} F(\mathbf{W}_2, \myvec{\omega}_2)\| = \|\nabla f(\mathbf{W}_1 \cdot \myvec{\omega}_1) \cdot \myvec{\omega}_1^\top - \nabla f(\mathbf{W}_2 \cdot \myvec{\omega}_2) \cdot \myvec{\omega}_2^\top\|
\end{align*}
By the Lipschitz continuity of \( \nabla f \), this can be bounded by,
$$
L \|(\mathbf{W}_1 \cdot \myvec{\omega}_1 - \mathbf{W}_2 \cdot \myvec{\omega}_2)\| \|\myvec{\omega}_1^\top\|.
$$
Since \( 0 \leq \omega_i \leq 1 \), the norm of \( \myvec{\omega}_1 \) is bounded, implying that the term is bounded by a constant times \( \|(\mathbf{W}_1, \myvec{\omega}_1) - (\mathbf{W}_2, \myvec{\omega}_2)\| \).

For the gradient with respect to \( \myvec{\omega} \), we have,
\begin{align*}
\|&\nabla_{\myvec{\omega}} F(\mathbf{W}_1, \myvec{\omega}_1) - \nabla_{\myvec{\omega}} F(\mathbf{W}_2, \myvec{\omega}_2)\|= \|\mathbf{W}_1^\top \nabla f(\mathbf{W}_1 \cdot \myvec{\omega}_1) - \mathbf{W}_2^\top \nabla f(\mathbf{W}_2 \cdot \myvec{\omega}_2)\|.
\end{align*}
Again, using the Lipschitz continuity of \( \nabla f \) and the boundedness of \( \mathbf{W} \), this term is similarly bounded by a constant times \( \|(\mathbf{W}_1, \myvec{\omega}_1) - (\mathbf{W}_2, \myvec{\omega}_2)\| \).

Thus, both gradient terms are Lipschitz continuous, and we conclude that \( F(\mathbf{W}, \myvec{\omega}) \) has a Lipschitz continuous gradient with respect to both \( \mathbf{W} \) and \( \myvec{\omega} \). 
Now, recall that $h(\myvec{\omega})$ is relaxed into $h(\myvec{\mu})\equiv\frac{1}{d}\sum^d_{j=1} \left(\frac{1}{2} - \frac{1}{2} \erf\left(-\frac{\mu_{j} + 0.5}{\sqrt{2}\sigma}\right) \right)$ which is continuously differential bounded function (where the sum of the probabilities that the gates $\left\{\omega_i\right\}_{i=1}^d$ are active, or $\sum_{i \in[d]} \mathbb{P}\left(\omega_i>0\right)\equiv\sum_{j=1}^d \Phi\left(\frac{\mu_j}{\sigma}\right)$, with $\Phi$ stands for the standard Gaussian CDF (please refer to \citep{yamada2020feature})). Thus, finally, $\mathcal{L}(\mathbf{W},\myvec{\omega})$, is a sum of continues smooth functions, meaning it holds the condition for the vanilla SGD convergence \citep{ketkar2017stochastic}.
\end{proof}

\section{Conclusion}
In this work, we propose a novel finetuning method that enforces structured sparsity on the weights of a base large language model (LLM). Our approach enables the removal of up to $20-30\%$ parameters in the attention matrices during the adaptation. We empirically compared our method against LoRA and full finetuning baselines on the GLUE benchmark with a limited number of train samples to 10K per task and analyzed the speedup provided by our method. In contrast to most low-rank adaptation methods, our method compresses the base model's weight dimensions during finetuning while working within a limited sample budget. In future research, we plan investigate additional methods to further compress LLM models during finetuning and address multi-task finetuning \citep{feng2024mixture}.

\bibliographystyle{plainnat}
\bibliography{refs}

\end{document}